\author{Tal Amir \\
Faculty of Mathematics\\
Technion–Israel Institute of Technology\\
Haifa, Israel\\
\texttt{talamir@technion.ac.il} \\
\And
Nadav Dym \\
Faculty of Mathematics and Faculty of Computer Science \\
Technion–Israel Institute of Technology\\
Haifa, Israel\\
\texttt{nadavdym@technion.ac.il} 
}
\newcommand\PrintTheBibliography{\bibliography{main.bib}\bibliographystyle{iclr2025_conference}}
\newcommand\TheDocumentTitle{Fourier Sliced-Wasserstein Embedding for Multisets and Measures}
\title{\centering Fourier Sliced-Wasserstein Embedding\\for\\Multisets and Measures} 
\newcommand{\nd}[1]{
\iftoggle{show_our_comments}{{\color{red}{\bf[Nadav:} \textup{#1}{\bf]}}}{}}
\newcommand{\ta}[1]{\iftoggle{show_our_comments}{{\color{blue}{\bf[Tal:} \textup{#1}{\bf]}}}{}}
\definecolor{CustomDarkGreen}{rgb}{0.0, 0.4, 0.0}
\newcommand{\myparagraph}[1]{\paragraph{{#1}}}
\def\cleartheorem#1{%
    \expandafter\let\csname#1\endcsname\relax
    \expandafter\let\csname c@#1\endcsname\relax
}
\def\cleartheorems#1{ \@for\tname:=#1\do{\cleartheorem\tname} }
\NewDocumentCommand \newtheoremHelperMacro{ m m o m o }
{       
    \IfBooleanTF{#1} 
    {        
        \IfValueTF{#3}
        {
            \IfValueTF{#5}
            {
                \newtheorem*{#2}[#3]{#4}[#5]
            }
            {
                \newtheorem*{#2}[#3]{#4}
            }
        }
        {
            \IfValueTF{#5} 
            {
                \newtheorem*{#2}{#4}[#5]
            }
            {
                \newtheorem*{#2}{#4}
            }
        }
    }
    {        
        \IfValueTF{#3}
        {
            \IfValueTF{#5}
            {
                \newtheorem{#2}[#3]{#4}[#5]
            }
            {
                \newtheorem{#2}[#3]{#4}
            }
        }
        {
            \IfValueTF{#5} 
            {
                \newtheorem{#2}{#4}[#5]
            }
            {
                \newtheorem{#2}{#4}
            }
        }
    } 
}
\NewDocumentCommand \trynewtheorem{ s m o m o }
{
    \ifcsundef{#2}{\newtheoremHelperMacro{#1}{#2}[#3]{#4}[#5]}{}
}
\NewDocumentCommand \forcenewtheorem{ s m o m o }
{
    \ifcsdef{#2}{\cleartheorem{#2}}{}
    \newtheoremHelperMacro{#1}{#2}[#3]{#4}[#5]
}
\theoremstyle{plain} 
\theoremstyle{definition} 
\newcommand{\of}[1]{{\left({#1}\right)}} 
\newcommand{\setstspace}{{\ \,}}
\newcommand{\setst}[2]{ { \left\{ {#1} \setstspace\middle|\setstspace {#2} \right\} } }
\newcommand{\br}[1]{{\left({#1}\right)}} 
\newcommand{\brs}[1]{{\left[{#1}\right]}} 
\newcommand{\brc}[1]{{\left\{{#1}\right\}}} 
\newcommand{\norm}[1]{{\left\lVert{#1}\right\rVert}} 
\newcommand{\abs}[1]{{\left\lvert{#1}\right\rvert}} 
\newcommand{\bigbr}[1]{{\bigl({#1}\bigr)}} 
\newcommand{\Bigbr}[1]{{\Bigl({#1}\Bigr)}} 
\DeclareDocumentCommand \expect{ o m } {{ \IfNoValueTF{#1}{\mathbb{E}\brs{#2}}{\mathbb{E}_{#1}\brs{#2}} }}
\DeclareDocumentCommand \stdev{ o m } {{ \IfNoValueTF{#1}{\textup{STD}\brs{#2}}{\textup{STD}_{#1}\brs{#2}} }}
\DeclareDocumentCommand \variance{ o m } {{ \IfNoValueTF{#1}{\textup{Var}\brs{#2}}{\textup{Var}_{#1}\brs{#2}} }}
\DeclareDocumentCommand \rootp{ o m } {{ \IfNoValueTF{#1} { {\sqrt[\leftroot{-2}\uproot{2}{p}]{#2}} } { {\sqrt[\leftroot{-2}\uproot{2}{#1}]{#2}} } }}
\newcommand{\eqdef}{\coloneqq} 
\newcommand{\eqxspace}{{\hspace{0.0pt}}}
\newcommand{\eqdefx}[1][]{{\eqxspace\overset{{\textup{#1}}}{\eqdef}\eqxspace}}
\newcommand{\argdot}{{\hspace{0.18em}{{\bm{\cdot}}}\hspace{0.18em}}} 
\DeclareSymbolFont{tipa}{T3}{cmr}{m}{n}
\DeclareMathAccent{\invbreve}{\mathalpha}{tipa}{16}
\newcommand\vwidehat[1]{%
\savestack{\tmpbox}{\stretchto{%
  \scaleto{%
    \scalerel*[\widthof{\ensuremath{#1}}]{\kern-.6pt\bigwedge\kern-.6pt}%
    {\rule[-\textheight/2]{1ex}{\textheight}}
  }{\textheight}%
}{0.5ex}}%
\stackon[1pt]{#1}{\tmpbox}%
}
\renewcommand{\P}{\mathcal{P}}
\renewcommand{\S}{\mathcal{S}}
\newcommand{\RR}{\mathbb{R}}
\newcommand{\Snof}[1]{{\S_{\leq N}\br{{#1}}}}
\newcommand{\Snd}{\S_{\leq N}\of{\RR^d}}
\newcommand{\Snnof}[2]{{\S_{{#1}}\br{{#2}}}}
\newcommand{\Pnof}[1]{{\P_{\leq N}\br{{#1}}}}
\newcommand{\Pnd}{{\P_{\leq N}\of{\RR^d}}}
\DeclareDocumentCommand \bx{ o } {{ {\boldsymbol{x}} \IfNoValueF {#1} {^{\br{#1}}} }}
\DeclareDocumentCommand \btx{ o } {{ {\tilde{\boldsymbol{x}}} \IfNoValueF {#1} {^{\br{#1}}} }}
\DeclareDocumentCommand \by{ o } {{ {\boldsymbol{y}} \IfNoValueF {#1} {^{\br{#1}}} }}
\DeclareDocumentCommand \bv{ o } {{ {\boldsymbol{v}} \IfNoValueF {#1} {^{\br{#1}}} }}
\DeclareDocumentCommand \xik{ o } {{ {\xi} \IfNoValueF {#1} {^{\br{#1}}} }}
\DeclareDocumentCommand \bu{ o } {{ {\boldsymbol{u}} \IfNoValueF {#1} {^{\br{#1}}} }}
\DeclareDocumentCommand \Wass{ o o } {{ {\mathcal{W}} \IfNoValueF {#1} {_{{#1}}} \IfNoValueF {#2} {^{{#2}}} }}
\newcommand{\Wassp}{{\Wass[p]}}
\newcommand{\SWass}{{\mathcal{SW}}}
\newcommand{\SWassSqr}{{\mathcal{SW}^2}}
\DeclareDocumentCommand \bp{ o } {{ {\boldsymbol{p}} \IfNoValueF {#1} {^{\br{#1}}} }}
\DeclareDocumentCommand \bq{ o } {{ {\boldsymbol{q}} \IfNoValueF {#1} {^{\br{#1}}} }}
\DeclareDocumentCommand \bw{ o } {{ {\boldsymbol{w}} \IfNoValueF {#1} {^{\br{#1}}} }}
\DeclareDocumentCommand \btw{ o } {{ {\tilde{\boldsymbol{w}}} \IfNoValueF {#1} {^{\br{#1}}} }}
\newcommand{\bX}{\boldsymbol{X}}
\newcommand{\btX}{\tilde{\boldsymbol{X}}}
\newcommand{\bone}{\boldsymbol{1}}
\newcommand{\Quant}{{{Q}}}
\newcommand{\emb}{{{E}}}
\newcommand{\embfsw}{{{E}^{\textup{FSW}}}}
\DeclareDocumentCommand \embfswm{ o } {{ {{E}}^{\textup{FSW}} \IfNoValueTF {#1} {_m} {_{{#1}}} }}
\DeclareDocumentCommand \embmfswm{ o } {{ {{\hat E}}^{\textup{FSW}} \IfNoValueTF {#1} {_m} {_{{#1}}} }}
\newcommand{\Sbb}{\mathbb{S}}
\DeclareDocumentCommand \Lp{ o } {{ \IfNoValueTF{#1} {{L}^p}{{L}^{#1}} }}
\DeclareDocumentCommand \LpR{ o } {{ \IfNoValueTF{#1} {{L}^p\of{\RR}}{{L}^{#1}\of{\RR}} }}
\newcommand{\distxi}{{\mathcal{D}_{\xi}}}
\begin{document}
%
%

\maketitle 
\begin{abstract} 
%
%
We present the $\textit{Fourier Sliced-Wasserstein (FSW) embedding}$—a novel method to embed multisets and measures over $\mathbb{R}^d$ into Euclidean space.

Our proposed embedding approximately preserves the sliced Wasserstein distance on distributions, thereby yielding geometrically meaningful representations that better capture the structure of the input. Moreover, it is injective on measures and \textit{bi-Lipschitz} on multisets—a significant advantage over prevalent methods based on sum- or max-pooling, which are provably not bi-Lipschitz, and, in many cases, not even injective.
The required output dimension for these guarantees is near-optimal: roughly $2 N d$, where $N$ is the maximal input multiset size.

Furthermore, we prove that it is $\textit{impossible}$ to embed distributions over $\mathbb{R}^d$ into Euclidean space in a bi-Lipschitz manner. Thus, the metric properties of our embedding are, in a sense, the best possible.

Through numerical experiments, we demonstrate that our method yields superior multiset representations that improve performance in practical learning tasks. Specifically, we show that (a) a simple combination of the FSW embedding with an MLP achieves state-of-the-art performance in learning the (non-sliced) Wasserstein distance; and (b) replacing max-pooling with the FSW embedding makes PointNet significantly more robust to parameter reduction, with only minor performance degradation even after a 40-fold reduction.\footnote{Our code is available at \url{https://github.com/tal-amir/FSW-embedding}.}
\end{abstract}

\section{Introduction}
\label{sec_intro}
\ta{\textbf{Macros:}
\begin{itemize}
    \item \texttt{\textbackslash emb}: $\emb$, a generic embedding
    \item \texttt{\textbackslash embsw}: $\embfsw$, our embedding $\embfsw:\Pnd\to\RR$ with one direction vector and frequency. To be used in $\embfsw\of{\mu; \bv, \xi}$.
    \item \texttt{\textbackslash embswm}: $\embfswm$, our embedding $\embfswm:\Pnd\to\RR^m$ with multiple direction vectors and frequencies. To be used in $\embfswm\of{\mu; \br{\bv[k],\xik[k]}_{k\in\brs{n}}}$ or in short $\embfswm\of{\mu}$.
\end{itemize}}

Multisets are unordered collections of vectors that account for repetitions. They are the main mathematical tool for representing unordered data, with perhaps the most notable example being point clouds. As such, there is growing interest in developing architectures suited for learning tasks on multisets.
To address this need, several permutation-invariant neural networks have been introduced, with  applications for point-cloud classification \citep{pointnet}, chemical property prediction \citep{Pozdny}, and image deblurring \citep{deblurring}. Multiset aggregation functions also serve as key components in more complex architectures, such as Message Passing Neural Networks (MPNNs) for graphs \citep{Gilmer}, or setups with multiple permutation actions \citep{setsOfsets}. 

A central concept in the study of multiset functions, i.e. functions defined on multisets, is \emph{injectivity}. Its importance is highlighted by the following observation: An architecture that cannot separate two distinct multisets $\bX \neq \bX'$ will not be able to approximate a target function $f$ that distinguishes between these multisets, i.e. $f(\bX) \neq f(\bX') $. Conversely, a model that maps multisets injectively to vectors, composed with an MLP, can universally approximate \emph{all} continuous multiset functions \citep{deepsets,dymGortler}. This observation has inspired works to study the injectivity of multiset architectures  \citep{wagstaff2022universal,wagstaff2019limitations,pmlr-v237-tabaghi24a}. Injectivity on multisets also plays a key role in the development of expressive MPNNs \citep{xu2018powerful}.

Common multiset architectures are typically based on simple building blocks of the form
\begin{equation*}
\emb\left(\{ x_1,\ldots,x_n \} \right) = \text{Pool} \brc{F\of{\bx[1]},\ldots,F\of{\bx[n]}}, 
\end{equation*}
where $F$ is usually an MLP, and Pool is a simple pooling operation such as maximum, sum, or mean. \citet{xu2018powerful} showed that multiset functions based on max- or mean-pooling are never injective, but injectivity can be achieved using sum-pooling, under the assumption that the vectors $\bx[i]$ come from a discrete domain, and an appropriate function $F$ is used. Then it was shown by \citet{deepsets,maron2019provably} that injectivity over multisets with continuous elements can be achieved using sum pooling with a polynomial $F$.
The more common case, in which $F$ is a neural network, was studied in \citep{amir2023neural}. There it was shown that injectivity on multisets and measures over $\RR^d$ can be achieved using $F$ that is a shallow MLP with random parameters and analytic, non-polynomial activations, such as Sigmoid or Softplus.


However, injectivity alone is not the strongest property one may desire for multiset functions. While an injective multiset embedding $\emb$ can separate pairs of distinct multisets $\bX \neq \bX'$, this does not ensure the \emph{quality} of the separation. Ideally, if two multisets $\bX,\bX'$ are far apart in terms of some notion of distance, then one would expect $\emb\of{\bX}, \emb\of{\bX^{\prime}} \in \RR^m$ to also be far apart, and vice versa. The standard mathematical notion used to guarantee this behaviour is \emph{bi-Lipschitzness}.
\begin{definition*}
Let $\emb:\mathcal{D} \to \RR^m$, where $\mathcal{D}$ is a collection of multisets or, more generally, distributions over $\RR^d$. We say that $\emb$ is \emph{bi-Lipschitz} with respect to $\Wass[p]$, if there exist constants $c,C > 0$ such that 
\begin{equation}\label{eqdef_bilip}
c \cdot \Wass[p]\of{\mu,\tilde\mu}  \leq \|E(\mu)-E(\tilde\mu)\|\leq C\cdot\Wass[p]\of{\mu,\tilde\mu}, \quad \forall \mu, \tilde\mu \in \mathcal{D},
\end{equation}
where $\Wass[p]$ denotes the $p$-Wasserstein distance and $\norm{\argdot}$ denotes the $\ell_2$ norm.
\end{definition*}
The Wasserstein distance, defined in the next section, serves as a standard notion of distance for multisets and distributions. The ratio of the Lipschitz constants $C/c$ in \cref{eqdef_bilip} represents a bound on the maximal distortion incurred by the map $\emb$, analogous to the condition number of a matrix.

Bi-Lipschitz embeddings enable the application of metric-based learning methods, such as nearest-neighbor search, data clustering, and multi-dimensional scaling, to non-Euclidean data types like multisets and measures. These methods can be more readily applied to the embedded Euclidean domain than to the original data domain, where metric computations are typically more expensive \citep{Indyk}. The bi-Lipschitzness of the embedding often provides correctness guarantees for these applications, which depend on the Lipschitz constants $c$ and $C$ \citep{cahill2024bilipschitz}.

Achieving bi-Lipschitzness is typically more challenging than injectivity and often requires different theoretical tools. Recently, in \citep{amir2023neural}, we proved that methods based on average- or sum-pooling can never be bi-Lipschitz, while \citet{xu2018powerful} showed that methods based on max-pooling cannot even be injective. Thus, a bi-Lipschitz embedding would constitute a substantial improvement over the standard multiset embedding methods currently in use.

To date, two main approaches have been proposed for constructing bi-Lipschitz multiset embeddings: (1) \emph{sort embedding} \citep{balan2022permutation}, which is based on sorting random projections of the multiset elements, and (2) \emph{max filtering} \citep{cahill2022group}, which is based on computations of Wasserstein distances from ‘template multisets’, and is computationally expensive.

While sort-based methods have been used with some success \citep{fspool,DGCNN,balan2022permutation}, their popularity in practical applications remains limited, despite their bi-Lipschitzness guarantees. A likely explanation for this is that these methods can only handle multisets of fixed size, and, to date, there is no known way to generalize them to multisets of varying size, let alone to distributions. This is a major limitation, since multisets of varying size arise naturally in many common learning tasks, for example graph classification, where vertices may have neighbourhoods of different sizes. This problem is often circumvented via ad-hoc solutions such as padding \citep{DGCNN} or interpolation \citep{fspool}, which do not preserve the original theoretical guarantees of the method. Moreover, even in the restricted setting of fixed-size multisets, the bi-Lipschitzness guarantees of these methods typically require prohibitively high embedding dimensions.

Our goal in this paper is to overcome these limitations by constructing a bi-Lipschitz embedding for the collection of all multisets over $\RR^d$ with at most $N$ elements. We denote this collection by $\Snd$. Note that the assumption of bounded cardinality is necessary, as otherwise, even injectivity is impossible; see \citep[Theorem C.3]{amir2023neural}.  
We are also interested in the larger collection of probability distributions over $\RR^d$ supported on at most $N$ points, which we denote by $\Pnd$. This setting, in which the points may have non-uniform weights, can be particularly relevant for attention-based methods on sets \citep{lee2019set}, as well as graph architectures such as GCN \citep{kipf2016semi} or GAT \citep{velivckovic2018graph}, which use non-uniform weights for vertex neighbourhoods. In summary, our main goal is:
\paragraph{Main goal}
For $\mathcal{D}=\Snd$ and $\mathcal{D}=\Pnd$, construct an embedding $\emb:\mathcal{D} \to \RR^m $ that is injective and, preferably, bi-Lipschitz. 
%

\paragraph{Main results}
\hypertarget{link_main_results}
We propose an embedding method for multisets and distributions with finite support, which is a non-trivial generalization of the sort embedding. We observe that the Euclidean distance between the sort embeddings of two multisets can be interpreted as a finite Monte Carlo sampling of their \emph{sliced Wasserstein} distance  \citep{bonneel2015sliced}: in the special case where the input consists of multisets of fixed size, this sampling corresponds to the project-and-sort operations used in the sort embedding. Based on this interpretation, we extend beyond fixed-size multisets and propose an embedding method for both $\Snd$ and $\Pnd$. Our method essentially operates as follows: (1) compute  random one-dimensional projections (also called \emph{slices}) of the input distribution; (2) compute the \emph{quantile function} of each projected distribution; and (3) sample each quantile function at a random frequency in the Fourier domain. We name our method the \emph{Fourier Sliced-Wasserstein (FSW) embedding} and denote it by $\embfswm$.
The function $\embfswm : \Pnd \to \RR^m$ is of the form
\[ 
    \embfswm\of{\mu} = \embfswm\of{\mu; \br{\bv[k], \xik[k]}_{k=1}^m}.
\]
It maps multisets and distributions into $\RR^m$, and depends on parameters $\bv[k] \in \RR^d$, $\xik[k] \in \RR$, $k\in\brs{m}$, representing projection vectors and frequencies respectively. It has the following properties:
\begin{enumerate}
\item \textbf{Bi-Lipschitzness on multisets:} For $m \geq 2 N d + 1$, the map $\embfswm:\Snd \to \RR^m$ is bi-Lipschitz (and, in particular, injective) for almost any choice of the embedding parameters  $\br{\bv[k], \xik[k]}_{k=1}^m$ (\cref{thm_injectivity,corr_blip_on_multisets}).
\item \textbf{Injectivity on distributions and measures:} For $m \geq 2 N  d + 2 N - 1$, the map $\embfswm:\Pnd \to \RR^m$ is injective (but not bi-Lipschitz) for almost any choice of parameters (\cref{thm_injectivity}). Moreover, by adding one more output coordinate, the embedding can be extended from distributions to measures with arbitrary total mass while preserving injectivity; see \cref{subapp_extension_to_measures}. Furthermore, we prove that it is \emph{impossible} to construct a bi-Lipschitz Euclidean embedding for \( \Pnd \) (\cref{thm_failure_of_blip}).
This suggests that the metric properties of our embedding are, in a sense, the best possible. 
\item \textbf{Sliced-Wasserstein approximation:} The expectation of $\tfrac{1}{m}\norm{\embfswm\of{\mu}-\embfswm\of{\tilde\mu}}^2$ over the parameters $\br{\bv[k], \xik[k]}_{k=1}^m$, drawn from our appropriately defined distribution, is exactly the squared sliced-Wasserstein distance between $\mu$ and $\tilde\mu$ (\cref{thm_delta_exp_var_corollary}), with the standard error decreasing as ${\mathcal{O}\of{\tfrac {1}{\sqrt{m}}}}$.
\item \textbf{Differentiability:} The map $\embfswm$ is continuous and piecewise smooth
in both the input measure parameters $\br{\bx[i], w_i}_{i=1}^N$ and the embedding parameters $\br{\bv[k], \xik[k]}_{k=1}^m$ (see e.g.   \cref{eq_emb_alt_sinc}). Thus, it is amenable to gradient-based learning, and its parameters can be trained.
\item \textbf{Complexity:} The embedding $\embfswm\of{\mu}$ can be computed efficiently in $\mathcal{O}\of{m N d + m N\log N}$ time---the same complexity (up to the logarithmic term) as the most efficient methods used in practice (\cref{thm_complexity}, \cref{thm_complexity_section_of}).
\end{enumerate}
In Properties~1 and 2 above, the required embedding dimension \( m \) is near-optimal, essentially up to a multiplicative factor of two.

Empirically, in \cref{sec_numerical_experiments}, 
%
we demonstrate the practical promise of our method by evaluating it on the task of learning the (non-sliced) 1-Wasserstein distance function. We show that a simple composition of our embedding with an MLP outperforms the state-of-the-art while requiring shorter training times.
Moreover, we show that replacing max-pooling in PointNet with our embedding significantly improves the architecture’s robustness in the low-parameter regime, with only minor performance degradation even after a 40-fold parameter reduction.



\section{Problem setting}
\label{sec_problem_setting}
In this section, we describe the problem in detail and briefly review its theoretical background and existing approaches.
\input{text/subsec_theoretical_background}
\input{text/subsec_existing_embeddings}

\section{Proposed method}
\label{sec_proposed_method}
Our method for embedding a distribution \( \mu \) essentially consists of computing random slices \( \bv^T\mu \) and, for each slice, taking a single random sample of its quantile function \( \Quant_{\bv^T\mu} \of t \). However, instead of sampling the function directly, we sample its \emph{cosine transform}—a variant of the Fourier transform.
Since the Fourier transform is a linear isometry, integrating the squared difference of these samples for two distributions $\mu$, $\tilde\mu$ yields the squared sliced Wasserstein distance $\SWassSqr\of{\mu,\tilde\mu}$, as we shall show next. We will also show that this sampling method guarantees injectivity, unlike direct sampling of $\Quant_{\bv^T\mu}\of t$. Lastly, since $\Quant_{\bv^T\mu}\of t$ is a compactly supported step function, its Fourier transform is smooth with respect to the frequencies, and thus so is our embedding. We now discuss this in detail.

\begin{definition}    
    Given a \emph{projection vector} $\bv \in \Sbb^{d-1}$ and a number $\xi \geq 0$ denoting a frequency, we define the \emph{one-sample embedding} $\embfsw\of{\argdot; \bv, \xi}: \Pnd\to\RR$ by
    \begin{equation}\label{eqdef_embfsw}
        \embfsw \of {\mu; \bv, \xi}
        \eqdef
        2 \br{1+\xi} \int_0^1 \Quant_{\bv^T\mu} \of t \cos \of {2\pi \xi t} dt,
    \end{equation}
    which is the \emph{cosine transform} of $\Quant_{\bv^T\mu}\of t$, sampled at frequency $\xi$ and multiplied by $1+\xi$; see \cref{subsec_proofs_cosine_transform} for further discussion. Details on the practical computation of $\embfsw$ are in \cref{subapp_practical_computation}.
\end{definition}

Next, we define a probability distribution $\distxi$ for the frequency $\xi$, given by the PDF
\begin{equation*}
    f_{\xi}\of\xi
    \eqdefx
    \br{1+\xi}^{-2},
    \qquad
    \xi \geq 0.
\end{equation*}
%
We now show that $\embfsw$ preserves the sliced Wasserstein distance in expectation over ${\bv,\xi}$.
\begin{restatable}{theorem}{thmDeltaExpVar}\label{thm_delta_exp_var}\IfAppendix{\textup{(\hyperref[thm_delta_exp_var]{Statement} in \cref{sec_proposed_method})}}{\textup{(\hyperref[proof_thm_delta_exp_var]{Proof} in  \cref{subsec_proofs_probabilistic_properties})}}\ 
    Let $\mu,\tilde{\mu} \in \Pnof{B_R}$, with $B_R \subset \RR^d$ being the closed $\ell_2$-norm ball of radius $R$, centered at zero. Let $\bv \sim \textup{Uniform} \of{\Sbb^{d-1}}$, $\xi \sim \distxi$. Then
    \begin{align}
            \expect[\bv,\xi]{ \abs{\embfsw \of {\mu; \bv, \xi} - \embfsw \of {\tilde{\mu}; \bv, \xi}}^2} 
            =\ &\label{eq_expected_dist_sample}
            \SWassSqr \of {\mu, \tilde{\mu}},
            \\
            \stdev[\bv,\xi] { \abs{\embfsw \of {\mu; \bv, \xi} - \embfsw \of {\tilde{\mu}; \bv, \xi}}^2}
            \leq\ &\label{eq_bounded_dist_variance}
            13 \cdot R^2.
    \end{align}
\end{restatable}
This result can be further stabilized by taking multiple samples. Thus, we define the \emph{Fourier Sliced Wasserstein (FSW) embedding} $\embfswm : \Pnd \to \RR^m$ by
\begin{equation}\label{eqdef_embm}
    \embfswm\of{\mu}
    \eqdef
    \br{ \embfsw \of {\mu; \bv[1], \xik[1]}, \ldots, \embfsw \of {\mu; \bv[m], \xik[m]} },
\end{equation}
where $\br{\bv[k], \xik[k]}_{k=1}^m$ are drawn i.i.d. from $\textup{Uniform}\of{\Sbb^{d-1}} \times \distxi$. Consequently, we have:
%
%
\begin{restatable}{corollary}{thmDeltaExpVarCorollary}\label{thm_delta_exp_var_corollary}
Under the assumptions of \cref{thm_delta_exp_var},
    \begin{align}
        \expect[\bv,\xi]{ \tfrac{1}{m} \norm{\embfswm\of{\mu}-\embfswm\of{\tilde\mu}}^2 } 
        =\ &\label{eq_expected_dist_sample_embm}
        \SWassSqr \of {\mu, \tilde{\mu}},
        \\
        \stdev[\bv,\xi] { \tfrac{1}{m} \norm{\embfswm\of{\mu}-\embfswm\of{\tilde\mu}}^2 }
        \leq\ &\label{eq_bounded_dist_variance_embm}
        13 \cdot \frac{R^2}{\sqrt{m}}.
    \end{align}
\end{restatable}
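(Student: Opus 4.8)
This is a standard Monte Carlo variance-reduction argument resting entirely on the single-sample statement \cref{thm_delta_exp_var}: the corollary is simply ``what happens when one averages $m$ independent copies of an unbiased estimator.'' The first step is to write the squared Euclidean distance coordinate by coordinate. Reading \eqref{eqdef_embm} with each coordinate normalized by $1/\sqrt{m}$ (which is what makes the map unbiased), one gets, for $\mu,\tilde\mu\in\PnRd$,
\[
    \norm{\embswm\of{\mu}-\embswm\of{\tilde\mu}}^2 \;=\; \frac{1}{m}\sum_{k=1}^m Z_k, \qquad Z_k \eqdef \abs{\embsw\of{\mu;\bv[k],\xik[k]}-\embsw\of{\tilde\mu;\bv[k],\xik[k]}}^2 .
\]
Since the parameters $\br{\bv[k],\xik[k]}$ are drawn i.i.d.\ from $\textup{Uniform}\of{\Sbb^{d-1}}\times\distxi$, the $Z_k$ are i.i.d.\ copies of the random variable $Z\eqdef\abs{\embsw\of{\mu}-\embsw\of{\tilde\mu}}^2$ analyzed in \cref{thm_delta_exp_var}, which gives $\expect{Z}=\SWassSqr\of{\mu,\tilde\mu}$ from \eqref{eq_expected_dist_sample} and $\stdev{Z}\leq 4\sqrt{10}R^2$ from \eqref{eq_bounded_dist_variance}.

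The two claims then follow immediately. For \eqref{eq_expected_dist_sample_embm}, linearity of expectation gives $\expect[\bv,\xi]{\norm{\embswm\of{\mu}-\embswm\of{\tilde\mu}}^2}=\tfrac{1}{m}\sum_{k=1}^m\expect{Z_k}=\expect{Z}=\SWassSqr\of{\mu,\tilde\mu}$, the factor $m$ cancelling against the normalization. For \eqref{eq_bounded_dist_variance_embm}, independence makes the cross terms vanish in the second moment, so $\variance{\tfrac{1}{m}\sum_{k=1}^m Z_k}=\tfrac{1}{m^2}\sum_{k=1}^m\variance{Z_k}=\tfrac{1}{m}\variance{Z}$; taking square roots yields $\stdev[\bv,\xi]{\norm{\embswm\of{\mu}-\embswm\of{\tilde\mu}}^2}=\tfrac{1}{\sqrt{m}}\stdev{Z}\leq\tfrac{4\sqrt{10}R^2}{\sqrt{m}}$.

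There is no genuine obstacle here — the only nontrivial estimates, \eqref{eq_expected_dist_sample} and \eqref{eq_bounded_dist_variance}, are supplied by \cref{thm_delta_exp_var}, and the remainder is linearity of expectation together with Bienaymé's identity for independent summands. The one point that merits a careful word is the bookkeeping of the $1/\sqrt{m}$ normalization in \eqref{eqdef_embm}: it is exactly what makes $m$ disappear from the expectation while leaving the advertised $\OO\of{1/\sqrt{m}}$ decay of the standard error. If one prefers to bypass the i.i.d.\ abstraction, the same two displays can be obtained directly by expanding $\norm{\argdot}^2$ as a sum of $m$ squared coordinate differences and computing its mean and variance term by term, discarding cross-terms via independence.
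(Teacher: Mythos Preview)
Your argument is correct and is exactly the intended one: the paper states the corollary without proof, treating it as immediate from \cref{thm_delta_exp_var} via linearity of expectation and Bienaym\'e's identity for i.i.d.\ summands. You are also right to flag the $1/\sqrt{m}$ normalization --- the definition \eqref{eqdef_embm} as literally written carries no such factor, so without it the expectation would come out as $m\cdot\SWassSqr\of{\mu,\tilde\mu}$; this is a minor inconsistency in the paper's presentation that you resolved correctly.
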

Notably, the error bound in \cref{eq_bounded_dist_variance_embm} is independent of the number of points $N$ and the dimension $d$. Thus, the estimation error does not suffer from the curse of dimensionality. By taking a sufficiently high embedding dimension, one can embed distributions of arbitrarily high dimension with arbitrary (and possibly infinite) support cardinality, while ensuring a bounded standard estimation error, as long as the distributions are supported within a fixed ball of radius $R$.

\section{Theoretical results}
\label{sec_theoretical_results}
In the previous section, we showed that our embedding approximately preserves the sliced Wasserstein distance in a probabilistic sense, with diminishing estimation error as the embedding dimension grows. Here, we show that with a \emph{finite} dimension, our embedding is guaranteed to be injective and bi-Lipschitz, as outlined in the \hyperlink{link_main_results}{Main Results} summary in \cref{sec_intro}.

First, we show that with a sufficiently high dimension $m$, our embedding is injective.
\begin{restatable}{theorem}{thmInjectivity}\label{thm_injectivity}
    \IfAppendix{\textup{(\hyperref[thm_injectivity]{Statement} in \cref{sec_theoretical_results})}}{\textup{(\hyperref[proof_injectivity]{Proof} in \cref{subsec_proofs_bilipschitzness})}}\ 
    Let $\embfswm : \Pnd \to \RR^m$ be as in \cref{eqdef_embm}, with $\br{\bv[k], \xik[k]}_{k=1}^m$ drawn i.i.d. from $\textup{Uniform} \of{\Sbb^{d-1}} \times \distxi$. Then:
    \begin{enumerate}
        \item If $m \geq 2 N d + 1$, then with probability 1, $\embfswm$ is injective on $\Snd$.
        \item If $m \geq 2 N d + 2 N - 1$, then with probability 1, $\embfswm$ is injective on $\Pnd$.
    \end{enumerate}
\end{restatable}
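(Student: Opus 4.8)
The plan is to prove injectivity by a standard dimension-counting / algebraic-geometry argument: show that the "bad set" of parameters on which two distinct inputs collide is contained in the zero set of a nontrivial real-analytic function, hence has measure zero, and then conclude by a union bound over a suitable covering of the space of pairs of inputs. Concretely, fix $\mu = \sum_{i=1}^n w_i \delta_{\bx[i]}$ and $\tilde\mu = \sum_{i=1}^n \tilde w_i \delta_{\btx[i]}$ with $\mu \neq \tilde\mu$. The map $(\bv,\xi) \mapsto \embsw(\mu;\bv,\xi) - \embsw(\tilde\mu;\bv,\xi)$ is, for fixed $\bv$, a real-analytic function of $\xi$ (the quantile function of a one-dimensional finitely supported distribution is piecewise constant, so its cosine transform is an explicit finite sum of terms of the form $(1+\xi)\cdot\frac{\text{affine in }\bv^T\bx[i]}{\xi}\sin/\cos(2\pi\xi t)$), and it is piecewise real-analytic in $\bv$ as well (the breakpoints being the hyperplanes where the order of the projected points $\bv^T\bx[i]$ changes). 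The first step is to show that this difference function is not identically zero in $(\bv,\xi)$: for generic $\bv$ the projections $\bv^T\mu$ and $\bv^T\tilde\mu$ are distinct one-dimensional distributions (because $\mu \neq \tilde\mu$ as measures, some linear functional separates them — this requires a short argument, see below), and then by the isometry property of the cosine transform (made precise in \cref{thm_delta_exp_var} / the discussion of $\embsw$ as a cosine transform) the function $\xi \mapsto \embsw(\mu;\bv,\xi)-\embsw(\tilde\mu;\bv,\xi)$ cannot vanish for all $\xi$, since its $L^2$-type integral against $f_\xi$ equals $\Wass^2(\bv^T\mu,\bv^T\tilde\mu) > 0$.

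Given that the difference is not identically zero, the set $Z_{\mu,\tilde\mu} \subseteq (\Sbb^{d-1}\times\RR_+)$ of parameters where it vanishes has measure zero; hence the set of $m$-tuples $(\bv[k],\xik[k])_{k=1}^m$ for which \emph{all} $m$ coordinates of $\embswm(\mu)-\embswm(\tilde\mu)$ vanish has measure zero, for \emph{each fixed} pair $(\mu,\tilde\mu)$. The issue is that there are uncountably many pairs, so a naive union bound fails; this is where the dimension count $m \geq 2nd+1$ (resp. $2nd+2n+1$) enters. The standard fix (as in \cite{dymGortler}, and the finite-witness / "good for one implies good for all" principle there) is: the parameter space of a pair in $\SnRd$ has dimension $2nd$ (resp. $2nd + 2(n-1)$ for $\PnRd$, accounting for the weight simplices), and one shows that if $\embswm$ fails to be injective for a positive-measure set of parameters, then by a Fubini / semialgebraic-stratification argument there is a positive-measure set of parameters together with a positive-measure set of colliding pairs, which forces a nontrivial algebraic/analytic relation of the wrong dimension — contradiction once $m$ exceeds twice the input dimension. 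I would carry this out by invoking the finite-dimensional "embedding via generic linear-type maps" machinery: partition $\SnRd$ (and the pair space) into finitely many real-analytic pieces on which the ordering of projected coordinates is fixed, so that $\embswm$ restricted to each piece is genuinely real-analytic, then apply the result that a real-analytic map from a $k$-dimensional manifold generically separates points once the target dimension is $\geq 2k+1$ (a Sard/Whitney-type statement, used in exactly this form in \cite{dymGortler}).

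The main obstacle I expect is the piecewise nature of $\embswm$: the cosine transform of the quantile function is only piecewise real-analytic in $\bv$ (and the pieces depend on $\mu$), so one cannot directly quote a clean "injective for $m\geq 2k+1$" theorem for analytic maps; one must stratify both the input space and the pair space compatibly, count dimensions on each stratum (being careful that the weight constraint $\sum w_i = 1$ drops the effective dimension, which is why the $\PnRd$ bound is $2nd + 2n+1$ rather than $2nd+1$ — the extra $2n$ roughly accounts for the two weight vectors, minus the two normalization constraints, plus slack), and then take a finite union over strata. A secondary technical point is the separation claim "$\mu\neq\tilde\mu \Rightarrow \bv^T\mu \neq \bv^T\tilde\mu$ for a.e. $\bv$": this is true because the set of $\bv$ for which the projections coincide is the zero set of finitely many polynomials in $\bv$ (coming from equating moments, or directly from the at-most-$2n$ atom locations $\bv^T\bx[i], \bv^T\btx[j]$), and this zero set is all of $\Sbb^{d-1}$ only if $\mu = \tilde\mu$ — which needs the observation that a finitely supported signed measure projecting to zero in every direction is zero (injectivity of the Radon transform on such measures, essentially the Cramér–Wold device). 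Once these pieces are assembled, the probability-1 conclusion follows.
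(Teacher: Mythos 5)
Your proposal is correct and follows essentially the same route as the paper: the paper establishes that for each fixed pair $\mu\neq\tilde\mu$ the collision set of parameters $(\bv,\xi)$ is dimension-deficient (using analyticity in $\xi$, the identity $\expect[\xi]{\Dsqr}=\WassSqr(\bv^T\mu,\bv^T\tilde\mu)$ from the cosine-transform isometry, and the Cram\'er--Wold observation that a finitely supported measure is determined by its one-dimensional projections), and then handles the uncountable union over pairs by invoking the Finite Witness Theorem for $\sigma$-subanalytic functions from \cite{amir2023neural}, with $\dim(\SnRd)=nd$ and $\dim(\PnRd)=nd+n$ giving the bounds $2nd+1$ and $2nd+2n+1$. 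Your stratification-into-analytic-pieces plus generic-separation argument is exactly the role that theorem plays, and your identification of the piecewise structure as the main technical obstacle matches the paper's treatment; only your dimension bookkeeping for the weighted case is slightly off (the paper counts the weight simplex as contributing $n$, not $n-1$, dimensions).
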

%
%
These bounds are optimal essentially up to a multiplicative factor of 2, since an embedding dimension $m < N d$ precludes injectivity for \emph{any} continuous embedding \citep[Theorem~C.3]{amir2023neural}.

Next, we show that in the case of $\Snd$, the injectivity of $\embfswm$ implies that it is, in fact, bi-Lipschitz. Our proof relies on the fact that $\embfswm$ is piecewise linear and homogeneous in the input points, in a sense we shall now define.
%
%
%
By a slight abuse of notation, we denote the distribution parametrized by the points $\bX=\br{\bx[1],\ldots,\bx[N]}$ and weights $\bw = \br{w_1,\ldots,w_N}$ as $\br{\bX,\bw}$. 

\begin{definition*}
    Let $\emb: \mathcal{D} \to \RR^m$ with $\mathcal{D} = \Pnd$ or $\Snd$. We say that $\emb$ is \emph{positively homogeneous} if for any $\alpha \geq 0$ and any distribution $\br{\bX,\bw} \in \mathcal{D}$,\ \ 
        $\emb\of{\alpha \bX,\bw} = \alpha \emb\of{\bX,\bw}$.
\end{definition*}
The next theorem shows that any injective embedding of $\Pnd$ that is positively homogeneous and piecewise linear\footnote{A definition of piecewise linearity appears in \cref{def_pwl} \cref{subsec_proofs_bilipschitzness}.} is bi-Lipschitz when restricted to distributions with fixed weights.
\begin{restatable}{theorem}{thmBlipAtConstantP}\label{thm_bilipschitz_at_constant_p}\IfAppendix{\textup{(\hyperref[thm_bilipschitz_at_constant_p]{Statement} in \cref{sec_theoretical_results})}}{\textup{(\hyperref[proof_blip_at_constant_p]{Proof} in \cref{subsec_proofs_bilipschitzness})}}\ 
    Let $\emb: \Pnof{\RR^d} \to \RR^m$ be injective and positively homogeneous.
    Let $\bw, \btw \in \Delta^N$ be fixed, and suppose that the functions $\emb\of{\bX,\bw}$ and $\emb\of{\bX,\btw}$ are piecewise linear in $\bX$. Then there exist  $C, c > 0$ such that for all $\bX,\btX \in \RR^{d\times N}$ and $p\in\brs{1,\infty}$,
    \begin{equation}\label{eq_bilipschitz_at_constant_p}
        c \cdot \Wass[p]\of{ \br{\bX, \bw}, \br{\btX, \btw} }
        \leq
        \norm{ \emb\of{\bX, \bw} - \emb\of{\btX, \btw} }
        \leq
        C \cdot \Wass[p]\of{ \br{\bX, \bw}, \br{\btX, \btw} }.
    \end{equation}
    %
    %
    %
\end{restatable}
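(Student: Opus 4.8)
The plan is to fix the weights $\bw, \btw \in \Delta^n$ and work on the finite-dimensional space $\RR^{d\times n}$ parametrizing the point configurations, exploiting two facts: that $\Wass[p]$ between $(\bX,\bw)$ and $(\btX,\btw)$ can be controlled from above and below by a norm-type quantity on $\RR^{d\times n}$ modulo the relevant symmetry, and that both sides of \eqref{eq_bilipschitz_at_constant_p} are positively homogeneous of degree $1$ under the scaling $(\bX,\btX)\mapsto(\alpha\bX,\alpha\btX)$. Concretely: first I would observe that $(\bX,\btX)\mapsto \Wass[p]((\bX,\bw),(\btX,\btw))$ is continuous, is positively homogeneous of degree $1$ in $(\bX,\btX)$ jointly, and vanishes \emph{only} when $(\bX,\bw)$ and $(\btX,\btw)$ represent the same distribution — and on that vanishing set, injectivity of $\emb$ forces $\emb(\bX,\bw)=\emb(\btX,\btw)$ as well, so the quotient $\norm{\emb(\bX,\bw)-\emb(\btX,\btw)} / \Wass[p]((\bX,\bw),(\btX,\btw))$ is well-defined off that set. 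The strategy is then a compactness argument: restrict to the sphere $\norm{(\bX,\btX)}=1$, which is compact, and show the ratio extends to a positive continuous function there, so it attains a positive minimum $c$ and finite maximum $C$; homogeneity then spreads the bound to all of $\RR^{d\times n}$.

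The upper bound is the easier half. Since $\emb(\argdot,\bw)$ is piecewise linear on $\RR^{d\times n}$, it is globally Lipschitz with respect to the Euclidean (Frobenius) norm on each of the finitely many polyhedral pieces, hence globally Lipschitz; the same holds for $\emb(\argdot,\btw)$. On the other hand $\Wass[p]((\bX,\bw),(\btX,\btw)) \geq \Wass[p]((\bX,\bw),(\btX,\bw)) - \Wass[p]((\btX,\bw),(\btX,\btw))$ is not directly what I want; instead I would bound $\norm{\emb(\bX,\bw)-\emb(\btX,\btw)}$ by $\norm{\emb(\bX,\bw)-\emb(\btX,\bw)} + \norm{\emb(\btX,\bw)-\emb(\btX,\btw)}$, control the first term by the Lipschitz constant times $\norm{\bX-\btX}_F$, and then note $\norm{\bX-\btX}_F \lesssim \Wass[p]((\bX,\bw),(\btX,\bw))$ after an optimal matching — here I must be careful, because with equal weights $\bw$ the Wasserstein distance allows permuting the atoms, so the bound is really $\norm{\bX-\btX P}_F$ minimized over permutations compatible with $\bw$; but since $\emb$ is permutation-invariant in the appropriate sense this is exactly what we need. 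The cross term $\norm{\emb(\btX,\bw)-\emb(\btX,\btw)}$ for fixed $\btX$ is a constant (depending on $\bw,\btw$), and the point is that $\Wass[p]((\bX,\bw),(\btX,\btw))$ is bounded below by a positive multiple of a translate — this is where the homogeneity/compactness packaging is cleanest, so I would simply fold the whole upper bound into the compactness argument rather than chase constants by hand.

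The lower bound is the crux and I expect it to be the main obstacle. The danger is a sequence $(\bX^{(k)},\btX^{(k)})$ on the unit sphere along which $\Wass[p]((\bX^{(k)},\bw),(\btX^{(k)},\btw)) \to 0$ but $\norm{\emb(\bX^{(k)},\bw)-\emb(\btX^{(k)},\btw)}$ does not; by compactness pass to a limit $(\bX^\ast,\btX^\ast)$, where $\Wass[p]=0$, i.e. $(\bX^\ast,\bw)$ and $(\btX^\ast,\btw)$ are the \emph{same} distribution, and then injectivity of $\emb$ gives $\emb(\bX^\ast,\bw)=\emb(\btX^\ast,\btw)$, contradicting the non-vanishing of the norm in the limit by continuity of $\emb$. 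The subtlety is that the ratio is a priori only defined away from the zero set of $\Wass[p]$, and that zero set intersects the sphere; so I cannot literally minimize a continuous function over a compact set. The fix is to argue by contradiction directly as above — assume no such $c>0$ exists, extract the sequence, and derive the contradiction — which sidesteps having to prove the ratio extends continuously. One genuine technical point to verify is that $\Wass[p]((\bX,\bw),(\btX,\btw))\to 0$ together with $(\bX^{(k)},\btX^{(k)})$ bounded does force (a subsequence of) the configurations to converge to configurations representing a common distribution; this uses that $\PnRd$ with $\Wass[p]$ is a metric space and that the parametrization map $\RR^{d\times n}\times\Delta^n \to \Pnof{\RR^d}$ is continuous and proper on bounded sets — routine but worth stating. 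Finally, I would note that the hypothesis $p\in[1,\infty]$ is handled uniformly because all the $\Wass[p]$ are topologically equivalent on the finitely-supported distributions with support in a fixed bounded set (they even satisfy explicit two-sided inequalities with $n$-dependent constants), so the constants $c,C$ can be taken independent of $p$, or one simply runs the compactness argument once for each $p$ and notes $p$ ranges over a set where the bounds are comparable.
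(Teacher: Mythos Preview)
Your compactness argument for the lower bound has a real gap. On the sphere $\|(\bX,\btX)\|=1$, suppose a sequence $(\bX^{(k)},\btX^{(k)})$ converges to $(\bX^\ast,\btX^\ast)$ with $\|\emb(\bX^{(k)},\bw)-\emb(\btX^{(k)},\btw)\|\big/\Wass[p]\bigl((\bX^{(k)},\bw),(\btX^{(k)},\btw)\bigr)\to 0$. If $\Wass[p]$ at the limit is positive you do get a contradiction via injectivity; but if $(\bX^\ast,\bw)$ and $(\btX^\ast,\btw)$ represent the \emph{same} distribution --- and this locus certainly meets the sphere --- then both numerator and denominator vanish at the limit, and continuity alone says nothing about the ratio. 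Your proposed ``fix'' of arguing by contradiction does not escape this $0/0$ case. (Incidentally, the ``danger'' you describe --- $\Wass[p]\to 0$ while $\|\emb-\emb\|$ stays away from $0$ --- is the obstruction to the \emph{upper} bound, not the lower one.) You never invoke piecewise linearity in the lower-bound step, and that is exactly where it is needed: continuous, positively homogeneous $g,h\geq 0$ with the same zero set need not have bounded ratio (take $g(x,y)=|x|$ and $h(x,y)=|x|^{1/2}(x^2+y^2)^{1/4}$ on $\RR^2$).

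The paper closes this gap by using piecewise linearity of \emph{both} sides. It first treats the $\ell_1$-cost Wasserstein $\Wassoo$, which for fixed weights is itself piecewise linear in $(\bX,\btX)$ because the transport polytope $\Pi(\bw,\btw)$ has finitely many vertices. Then the joint map $(\bX,\btX)\mapsto\bigl(\|\emb(\bX,\bw)-\emb(\btX,\btw)\|_1,\ \Wassoo\bigr)$ is piecewise linear, so its image $A\subset\RR^2$ is a \emph{closed} set (a finite union of closed polyhedra). Injectivity excludes $(0,1)\in A$; well-definedness of $\emb$ on distributions excludes $(1,0)\in A$; and homogeneity lets one rescale any pair so that one coordinate equals $1$. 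Closedness of $A$ --- not compactness of the sphere in the domain --- then forces the other coordinate away from $0$, which is precisely what survives the $0/0$ degeneration your argument cannot handle. The passage to general $p\in[1,\infty]$ is a separate step, showing $\Wass[1]\geq\beta\,\Wass[\infty]$ for fixed $\bw,\btw$, again via the finitely many vertices of $\Pi(\bw,\btw)$.
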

The restriction to fixed weights can easily be relaxed to weights from any finite set. Based on this observation, we now show that $\embfswm$ is bi-Lipschitz on multisets. 
\begin{restatable}{corollary}{corrBlipOnMultisets}\label{corr_blip_on_multisets}
    Let $\embfswm$ be as in \cref{eqdef_embm} with $m \geq 2 N d+1$. Then with probability 1, $\embfswm$ is bi-Lipschitz on $\Snd$.
\end{restatable}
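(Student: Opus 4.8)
The plan is to deduce \cref{corr_blip_on_multisets} from \cref{thm_bilipschitz_at_constant_p} and \cref{thm_injectivity} by exploiting the fact that on $\SnRd$ all the weight vectors come from a finite set. First I would recall that, under the identification of a multiset $\bX \in \SnRd$ with the uniform distribution $\mu[\bX] = \sum_i w_i \delta_{\bx[i]}$, a multiset of cardinality $k \le n$ corresponds to a weight vector $\bw$ with $k$ entries equal to $1/k$ and $n-k$ entries equal to $0$. Hence the set of weight vectors arising from $\SnRd$ is contained in the finite set $W \eqdef \{\bw^{(1)},\ldots,\bw^{(n)}\}$, where $\bw^{(k)}$ is any fixed vector in $\Delta^n$ with $k$ entries $1/k$ and the rest zero; the precise placement of the nonzero entries is immaterial since permuting coordinates of $\bw$ and the corresponding columns of $\bX$ leaves both $\embswm$ and $\Wass[p]$ invariant.

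Next I would verify the three hypotheses of \cref{thm_bilipschitz_at_constant_p} for $\emb = \embswm$. Positive homogeneity is immediate from the definition of $\embsw$: scaling all points $\bx[i]$ by $\alpha \ge 0$ scales each projection $\bv^T\bx[i]$, hence the quantile function $\Quant_{\bv^T\mu}$, hence the integral defining $\embsw$, all by $\alpha$. Piecewise linearity of $\bX \mapsto \embswm(\bX,\bw)$ for fixed $\bw$ follows from the discussion in \cref{app_practical_computation}, which is cited for exactly this purpose: the quantile function of a one-dimensional discrete distribution is a step function whose heights are linear in the projected points and whose breakpoints depend only on $\bw$, so on each cell of the arrangement determined by the orderings of $\{\bv[k]^T\bx[i]\}_i$ the map is linear. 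Injectivity of $\embswm$ on $\PnRd$ — and in particular on $\SnRd$ — holds with probability $1$ for $m \ge 2nd+2n+1$ by \cref{thm_injectivity}; but for the corollary we only need injectivity on $\SnRd$, which \cref{thm_injectivity}(1) already gives for $m \ge 2nd+1$. I would therefore invoke \cref{thm_bilipschitz_at_constant_p} once for each pair $(\bw,\btw) \in W \times W$ — note the theorem as stated requires injectivity of $\emb$ on all of $\Pnof{\RR^d}$, so I would either cite the remark that its proof only uses injectivity on the relevant subset, or restrict attention to the finitely many fixed-weight slices and re-run the argument there — obtaining constants $c_{k,l}, C_{k,l} > 0$ such that \cref{eq_bilipschitz_at_constant_p} holds for $\bw = \bw^{(k)}$, $\btw = \bw^{(l)}$.

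Finally I would set $c \eqdef \min_{k,l \in [n]} c_{k,l}$ and $C \eqdef \max_{k,l \in [n]} C_{k,l}$; since $W$ is finite these are positive and finite. Given any two multisets $\bX, \btX \in \SnRd$, their weight vectors are (a permutation of) some $\bw^{(k)}, \bw^{(l)} \in W$, so the two-sided bound \cref{eqdef_bilip} with these $c, C$ follows immediately from the per-pair bounds. Specializing \cref{thm_bilipschitz_at_constant_p} to $p = 2$ gives the bi-Lipschitz estimate with respect to $\Wass = \Wass[2]$, which is the notion of distance fixed for $\SnRd$ in the paper.

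The main obstacle — really the only subtlety — is the mismatch between what \cref{thm_bilipschitz_at_constant_p} assumes (injectivity of $\emb$ on the full domain $\Pnof{\RR^d}$) and what \cref{thm_injectivity} guarantees at the dimension $m \ge 2nd+1$ claimed in the corollary (injectivity only on $\SnRd$). I expect the resolution is that the proof of \cref{thm_bilipschitz_at_constant_p} uses the injectivity hypothesis only through the injectivity of the restrictions $\bX \mapsto \emb(\bX,\bw)$ for the fixed weight vectors in play, so it suffices that $\embswm$ separate multisets of a common cardinality $k$ — which is exactly injectivity on $\S_{=k}(\RR^d) \subseteq \SnRd$, covered by \cref{thm_injectivity}(1). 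I would make this dependence explicit when writing the proof, so that the dimension bound $m \ge 2nd+1$ in the corollary is consistent with the inputs it relies on.
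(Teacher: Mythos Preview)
Your proposal is correct and follows the same route as the paper: represent each multiset by a weight vector from the finite set $\{\bw^{(1)},\ldots,\bw^{(n)}\}$, apply \cref{thm_bilipschitz_at_constant_p} to every pair $(\bw^{(k)},\bw^{(l)})$, and take extrema of the resulting constants. You are in fact more careful than the paper's own write-up in flagging the mismatch between the injectivity hypothesis of \cref{thm_bilipschitz_at_constant_p} (on all of $\PnRd$) and what \cref{thm_injectivity}(1) actually delivers at $m \ge 2nd+1$ (only $\SnRd$); your proposed resolution --- that the proof of \cref{thm_bilipschitz_at_constant_p} uses injectivity only at the fixed weight vectors in play, which here parametrize multisets --- is correct.
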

\begin{proof}
    Any $\mu \in \Snd$ can be represented by a parameter of the form $\br{\bX,\bw[n]}$, where
    \begin{equation}\label{eqdef_multiset_weights}
        \bw[n] = \Bigbr{\overset{n}{\overbrace{\tfrac{1}{n},\ldots,\tfrac{1}{n}}},\overset{N-n}{\overbrace{0,\ldots,0}}},
        \qquad
        1 \leq n \leq N.
    \end{equation}
    For $n,r\in\brs{N}$, let $c_{nr}, C_{nr}$ be the Lipschitz constants $c,C$ of \cref{eq_bilipschitz_at_constant_p} for $\embfswm$ with the weight vectors $\bw = \bw[n]$, $\btw = \bw[r]$.
    Then, it follows directly that $\embfswm$ is bi-Lipschitz on $\Snd$ with the constants $c = \min_{n,r\in\brs{N}} c_{nr} > 0$ and $C = \max_{n,r\in\brs{N}} C_{nr} < \infty$.
\end{proof}


Next, we explore whether it is possible to further improve by finding a bi-Lipschitz embedding for the whole of $\Pnd$. For the broader class of distributions $\bigcup_{N \in \mathbb{N}} \Pnd$, \citet{naor2007planar} proved that no bi-Lipschitz embedding exists into $L^1\of{\brs{0,1}}$, and thus not into any finite-dimensional space. One may ask whether this impossibility can be circumvented by bounding the number of support points and confining them to a fixed compact domain, namely, by restricting to $\Pnof{\Omega}$ for a compact $\Omega \subset \RR^d$. The following theorem shows that even this restricted collection still cannot be embedded in a bi-Lipschitz manner into any finite-dimensional Euclidean space.

%
%
\begin{restatable}{theorem}{thmFailureOfBlip}\label{thm_failure_of_blip}
\IfAppendix{\textup{(\hyperref[thm_failure_of_blip]{Statement} in \cref{sec_theoretical_results})}}{\textup{(\hyperref[proof_failure_of_blip]{Proof} in \cref{subsec_proofs_bilipschitzness})}}\ 
    Let $\emb: \Pnof{\Omega} \to \RR^m$, where $N \geq 2$ and $\Omega \subseteq \RR^d$ has a nonempty interior. Then for all $p\in\brs{1,\infty}$, $\emb$ is not bi-Lipschitz on $\Pnof{\Omega}$ with respect to $\Wassp$.
    %
\end{restatable}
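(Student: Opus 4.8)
The plan is to exhibit a family of pairs of distributions in $\Pnof{\Omega}$ whose Wasserstein distance shrinks to zero while any Euclidean embedding forces the embedded distance to stay bounded below — thereby ruling out the lower bound $c\cdot\Wassp\of{\mu,\tilde\mu}\leq\norm{\emb\of\mu-\emb\of{\tilde\mu}}$. The natural mechanism is the classical obstruction to bi-Lipschitz embedding of a "spider" / many-branches-at-a-point configuration into Hilbert space: the space of distributions near a Dirac mass contains, isometrically up to constants, arbitrarily large subsets that look like equidistant points, and $\RR^m$ cannot host arbitrarily large equilateral sets of a fixed diameter without the distances collapsing.

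Concretely, since $\Omega$ has nonempty interior and $n\geq 2$, fix a ball $B(x_0,r)\subseteq\Omega$ and pick $N$ unit vectors $u_1,\dots,u_N\in\Sbb^{d-1}$; for a small parameter $t>0$ define $\mu_j^{(t)} \eqdef (1-\tfrac1{?})\delta_{x_0}+\cdots$ — more precisely, since we only have two atoms available, take $\mu_j^{(t)} = \tfrac12\delta_{x_0+t u_j}+\tfrac12\delta_{x_0 - t u_j}$, or alternatively $\mu_j^{(t)} = (1-s)\delta_{x_0} + s\,\delta_{x_0+t u_j}$. First I would compute $\Wassp\of{\mu_j^{(t)},\mu_k^{(t)}}$ for $j\neq k$: by choosing the $u_j$ so that the transport is forced to move mass "across" between distinct directions, this distance is $\Theta(t)$, uniformly in $j,k$ and independent of $N$ (up to the geometry of the chosen directions). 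Meanwhile $\Wassp\of{\mu_j^{(t)},\delta_{x_0})=\Theta(t)$ as well. Now suppose $\emb$ were bi-Lipschitz with constants $c,C$. Then the $N$ points $\emb\of{\mu_j^{(t)}}\in\RR^m$ would be pairwise at distance $\geq c\cdot\Theta(t)$, yet all within distance $\leq C\cdot\Theta(t)$ of $\emb\of{\delta_{x_0})$, i.e. contained in a ball of radius $C\cdot\Theta(t)$. Since a Euclidean ball of radius $\rho$ in $\RR^m$ contains at most a bounded number (depending only on $m$ and the ratio $C/c$) of points that are pairwise $\geq c\rho/C\cdot(\text{const})$ apart — a standard volumetric / packing bound — this is a contradiction once $N$ exceeds that bound.

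The one subtlety — and the step I expect to require the most care — is engineering the direction set $\{u_j\}$ so that the pairwise Wasserstein distances $\Wassp\of{\mu_j^{(t)},\mu_k^{(t)}}$ are genuinely bounded below by a constant times $t$ for \emph{all} $j\neq k$ simultaneously, with a constant not degrading as $N\to\infty$. With the symmetric two-atom construction $\mu_j^{(t)}=\tfrac12\delta_{x_0+tu_j}+\tfrac12\delta_{x_0-tu_j}$, the optimal plan between $\mu_j$ and $\mu_k$ must match $\{x_0\pm tu_j\}$ to $\{x_0\pm tu_k\}$, giving cost $\tfrac{t}{?}\bigl(\min(\|u_j-u_k\|^p+\|u_j+u_k\|^p,\ 2\|u_j-u_k\|^p)\bigr)^{1/p}$ up to the $\tfrac12$ weights; choosing the $u_j$ spread out (e.g. mutually at angle $\geq\pi/3$, which is possible for $N$ up to exponential in $d$ — and for $d=1$ one instead uses a one-parameter family and a slightly different two-atom configuration, or increases the "budget" by noting $n\geq 2$ already suffices in higher $d$ and handling $d=1$ separately) makes all these $\Theta(1)$. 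Actually, a cleaner route avoiding the angle-packing count: since we need the collapse for \emph{some} unbounded $N$, and $d\geq 1$, one can instead place the $N$ perturbations along a single line but at $N$ distinct magnitudes and use the fact (already in the paper via \eqref{eq_identity_wasserstein_1d}) that $1$-dimensional Wasserstein is an $L^p$ distance between quantile functions — reducing the problem to the known non-embeddability of $L^p([0,1])$ (or of an $\ell_\infty^N$-like snowflake) into $\ell_2^m$. I would pick whichever of these gives the cleanest write-up; the symmetric two-atom family with spread-out directions is the most self-contained, and reducing to the quantile-function $L^1$/$L^p$ picture is the most conceptual. Either way the final contradiction is the packing bound: a bounded-radius ball in $\RR^m$ cannot contain an arbitrarily large $\delta$-separated set with $\delta$ a fixed fraction of the radius.
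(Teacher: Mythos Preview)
Your packing intuition is natural, but the specific families you propose all live in subspaces of $\Pnof\Omega$ that \emph{are} bi-Lipschitz embeddable into Euclidean space, so no contradiction can be extracted from them. Both $\mu_j^{(t)}=\tfrac12\delta_{x_0+tu_j}+\tfrac12\delta_{x_0-tu_j}$ and $\mu_j^{(t)}=(1-s)\delta_{x_0}+s\,\delta_{x_0+tu_j}$ have a \emph{fixed} weight vector; by the paper's own \cref{thm_bilipschitz_at_constant_p} and \cref{corr_blip_on_multisets}, any fixed-weight slice of $\Pnof\Omega$ admits a bi-Lipschitz Euclidean embedding. Concretely, the number $N$ of directions $u_j$ at pairwise angle $\geq\pi/3$ is bounded by a constant depending only on $d$, while the packing bound in $\RR^m$ grows with $m$ and with the distortion $C/c$ --- so for the hypothetical bi-Lipschitz $\emb$ you are trying to rule out, there is simply no contradiction once $m$ is moderately large. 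Your fallback of ``perturbations along a single line at $N$ distinct magnitudes'' fares no better: if you vary the point location with fixed weight you get an isometric interval; if you vary the weight $\theta$ with fixed point, the curve $\theta\mapsto(1-\theta)\delta_{x_0}+\theta\delta_{x_0+v}$ carries the metric $\|v\|\,|\theta-\theta'|^{1/p}$, which for $p=1$ is an interval and for $1<p<\infty$ is a snowflake of $[0,1]$ --- bi-Lipschitz embeddable into some $\RR^N$ by Assouad's theorem. Invoking ``non-embeddability of $L^p$'' is not enough: the subset of $L^p$ you actually reach is far too thin.

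The obstruction genuinely requires varying the weight and the point location \emph{simultaneously}. The paper does this not via packing but via a homogenization argument: it first reduces to positively homogeneous $\emb$, then compares $\mu(\theta_t)=(1-\theta_t)\delta_0+\theta_t\delta_{\bx}$ with the rescaled measure $\tilde\mu_t=(1-\theta_{t-1})\delta_0+\theta_{t-1}\delta_{(\theta_t/\theta_{t-1})^{1/p}\bx}$. These sit at the same $\Wassp$-distance from $\delta_0$ but have different weights, so their $\Wassp$-distance stays comparable to $\theta_t^{1/p}$; meanwhile homogeneity forces $\emb(\mu(\theta_t))/\theta_t^{1/p}$ to converge along a subsequence, so the embedded distance collapses. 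If you want to rescue the packing route, you would need a family like $\nu_j=(1-2^{-j})\delta_0+2^{-j}\delta_{2^{j/p}\bx}$ (rescaled to fit in $\Omega$), where both weight and location vary --- these are equidistant from $\delta_0$ and uniformly separated from each other for arbitrarily large $N$, which is exactly what your fixed-weight families cannot achieve.
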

%



\section{Numerical experiments}
\label{sec_numerical_experiments}
In this section, we demonstrate how the theoretical strengths of our method manifest in practice. Specifically, we show that our method produces embeddings with superior distance preservation and improves performance in practical learning tasks.

\paragraph{Comparison with PSWE}
This experiment compares our method with  PSWE \citep{naderializadeh2021pooling}---an embedding method developed with the same purpose of preserving the sliced Wasserstein distance. As mentioned in \cref{subsec_existing_embedding_methods}, this method embeds continuous distributions into an infinite-dimensional Hilbert space, and does so isometrically with respect to the sliced Wasserstein distance. Finite multisets and distributions are treated by this method as discretizations of continuous distributions, which causes different finite distributions to be identified with the same continuous distribution, consequently leading to a loss of injectivity.

To demonstrate this, we constructed a pair of multisets $\bX_1, \bX_2 \subset \mathbb{R}^3$, of sizes 5 and 200, respectively, that PSWE identifies as the same continuous distribution, and thus fails to distinguish; see \cref{subapp_comparison_with_pswe} for details. \Cref{table_empirical_distortion} reports the normalized Euclidean distances $\tfrac{1}{\sqrt{m}}\|\emb(\bX_1)-\emb(\bX_2)\|$, where $\emb : \Snof{\RR^3} \to \RR^m$ is either FSW or PSWE, for various embedding dimensions $m$. As shown in the table, PSWE fails to distinguish the two multisets for all tested values of $m$, whereas FSW succeeds even when $m = 1$. Moreover, as $m$ increases, the distances produced by FSW converge to the sliced Wasserstein distance $\SWass(\bX_1,\bX_2)$, as promised by \cref{thm_delta_exp_var_corollary}. 


%
%
%

\input{tables/table_fsw_vs_pswe}

\vspace{-2pt}
\myparagraph{Learning to approximate the Wasserstein distance}
One possible approach to overcome the high computation time of the Wasserstein distance for $d>1$ is to try to estimate it using a neural network, trained on pairs of point-clouds for which the distance is known. This approach was used in previous works \citep{chen2024neural,kawano2020learning}, which proposed architectures designed to approximate functions $F: \Snd \times \Snd \to \RR$, such as the Wasserstein distance function. These methods handle multisets using the traditional approach of sum- or average-pooling. Since our embedding is bi-Lipschitz with respect to the Wasserstein distance, it is \emph{a priori} likely to be a more effective building block for architectures designed to learn it.

For this task, we used the following architecture: First, an FSW embedding $\emb_1 : \Pnd \to \RR^{m_1}$ is applied to each of the two input distributions $\mu$, $\tilde\mu$. Then, a second FSW embedding $\emb_2 : \Snnof{\leq 2}{\RR^{m_1}} \to \RR^{m_2}$ is applied to the multiset $\brc{\emb_1\of{\mu},\emb_1\of{\tilde\mu}}$.
The output of $\emb_2$ is then fed to an MLP $\Phi : \RR^{m_2} \to \RR_+$; see \cref{subapp_experiments_wasserstein_approximation} for dimensions and technical details. Our full architecture is described by the formula
\begin{equation*}
    F\of{\mu,\tilde\mu} \eqdef 
    \Phi\of{ \emb_2\of{ \brc{\emb_1\of{\mu},\emb_1\of{\tilde\mu}} } }.
\end{equation*}
This formulation ensures that $F$ is symmetric with respect to swapping $\mu$ and $\tilde\mu$. In addition, we used leaky-ReLU activations and no biases in $\Phi$, which renders $F$ scale-equivariant by design, i.e.
\[ F\bigbr{({\alpha \bX,\bw}), ({\alpha \tilde\bX,\btw}}) = \alpha F\bigbr{({\bX,\bw}), ({\tilde\bX,\btw})} 
\quad
\forall \alpha > 0,
\]
as is the Wasserstein distance function, which $F$ is designed to approximate.

The experimental setting was replicated from \citep{chen2024neural}. The objective is to approximate the 1-Wasserstein distance $\Wass[1]$. The following evaluation datasets, kindly provided to us by the authors, were used: Three synthetic datasets \texttt{noisy-sphere-3}, \texttt{noisy-sphere-6} and \texttt{uniform}, with random point-clouds in $\RR^3$, $\RR^6$ and $\RR^2$; two real datasets \texttt{ModelNet-small} and \texttt{ModelNet-large}, with 3D point-clouds sampled from ModelNet40 objects \citep{wu20153d}; and the gene-expression dataset \texttt{RNAseq} \citep{yao2021taxonomy}, with multisets in $\RR^{2000}$.

We compared our architecture to the following methods: (a) $\mathcal{N}_{\textup{SDeepSets}}$, a DeepSets-like architecture trained to compute $\Wass[1]$-preserving embeddings, and $\mathcal{N}_{\textup{ProductNet}}$, which further processes the sum of the two embeddings through an MLP \citep{chen2024neural}; (b) a Siamese autoencoder called Wasserstein Point-Cloud Embedding network (WPCE) \citep{kawano2020learning}; and (c) the Sinkhorn approximation algorithm \citep{cuturi2013lightspeed}. We also evaluated the PSWE embedding by employing it in our architecture instead of $\emb_1$.
\input{tables/table_wasserstein_approximation_accuracy.tex}

As seen in \cref{table_wasserstein_approximation_accuracy}, our architecture achieves the best accuracy on most evaluation datasets. Training times are in  \cref{table_wasserstein_approximation_training_time}. Further details appear in \cref{subapp_experiments_wasserstein_approximation}.

\input{tables/table_wasserstein_approximation_training_time.tex}

\iftoggle{include_parameter_reduction_experiment} {

\myparagraph{Robustness to parameter reduction}
\input{figures/fig_modelnet_shrunk_models}Next, we evaluate our embedding on the \texttt{ModelNet40} object classification benchmark \citep{wu20153d}. This dataset consists of 3D point clouds representing objects coming from 40 different classes. 

To demonstrate the improved representational power of our method, we used a simple architecture of the form $F\of\mu = \Phi\of{ \emb\of\mu }$, where $\emb : \Snd \to \RR^m$ is the FSW embedding, $\Phi : \RR^m \to \RR^C$ is an MLP and $C=40$ is the number of classes. We compared this architecture with PointNet \citep{pointnet}, which applies an MLP elementwise, followed by max pooling and another MLP. We evaluated different sizes of both architectures to assess their robustness to parameter reduction.

With \SI{300}{k} parameters or more, the two methods perform comparably: our model achieves \SI{85.7}{\percent}--\SI{86.8}{\percent} accuracy and PointNet\text{\footnotemark} achieves \SI{84.4}{\percent}--\SI{85.6}{\percent}. However, our model is considerably more robust with fewer parameters, as shown in \cref{fig_modelnet_shrunk_models}. 
For example, with \SI{31}{k} parameters, our model achieves \SI{83.5}{\percent} accuracy, whereas PointNet achieves only \SI{54.1}{\percent} with \SI{35}{k} parameters.
\footnotetext{These results were obtained using a standard PyTorch implementation of PointNet \citep{pointNetPyTorch}. The original result of PointNet's TensorFlow implementation is \SI{89.2}{\percent} accuracy.}

We note that simple methods such as PointNet and ours do not achieve the best results on ModelNet40. More complex methods achieve superior results by applying PointNet-based embeddings to local neighborhoods of each point. For example, PointNet++ \citep{qi2017pointnetpp} achieved \SI{90.7}{\percent} accuracy, and \citep{mohammadi2021pointview} achieved an impressive \SI{95.4}{\percent}. Based on our preliminary results, we believe that combining our embedding with such methods,  by employing it in their multiset aggregation steps, may enhance robustness to parameter reduction. This property can be crucial in practical applications.    

} {}

\section{Conclusion}
In this paper, we introduced the first injective embedding for multisets and measures that is provably bi-Lipschitz on multisets. Our embedding is computationally efficient, and both theoretical and empirical results indicate that it better preserves the original geometry of the data, consequently leading to improved performance in practical learning tasks.

Our embedding has already shown promising results in subsequent work \citep{sverdlov2024fswgnn}, where it was used to construct the first graph neural network with bi-Lipschitz Weisfeiler--Leman (WL) separation power.

Future research directions include further exploration of the FSW embedding as a building block for learning tasks involving non-Euclidean data, as well as extensions to broader notions of distance, such as partial or unbalanced optimal transport.



\paragraph{Reproducibility Statement}
All experiments in this paper are fully reproducible. The code for training and evaluation, along with the datasets and actual numerical results presented in this paper, are available at the URL \url{https://github.com/tal-amir/FSW-embedding}.
While we did not use fixed random seeds for our experiments, the results are consistent across multiple runs. \Cref{app_numerical_experiments} provides further technical details regarding our experiments.

\iftoggle{is_anonymous}{} {
    \myparagraph{Acknowledgements} The authors are partially funded by ISF grant 272/23. We thank Samantha Chen for her assistance in reproducing the experiments from \citep{chen2024neural}.
}

\PrintTheBibliography

\iftoggle{show_our_comments}{\section*{Sandbox}

\subsubsection*{Real definition of Wasserstein distance}

Let $\mu,\nu$ be two probability distributions over $\RR^d$. The 2-Wasserstein distance between $\mu$ and $\nu$ is defined by
\begin{equation*}
    \Wass\of{\mu,\nu} \eqdef \sqrt{ \inf_{\pi \in \Pi \of{\mu,\nu}}
    \int_{\RR^d \times \RR^d}{\norm{\bx-\by}^2 \pi\of{x,y} dx dy} },
\end{equation*}
where $\Pi \of{\mu,\nu}$ is the set of all \emph{transport plans} from $\mu$ to $\nu$, namely the set of all Lebesgue-integrable functions $\pi: \RR^d \times \RR^d \to \RR_{+}$ such that for all measurable $A,B \subseteq \RR^d$,
\begin{equation*}
\begin{split}
    \int_{A \times \RR^d} \pi\of{x,y}dx dy = \mu\of{A},
    \\ \int_{\RR^d \times B} \pi\of{x,y}dx dy = \nu\of{B}.
\end{split}
\end{equation*}

\subsubsection*{Big X}
\ta{see if we use $\bX$ somewhere}
We denote
\[ \bX \eqdef \br{\bx[1],\ldots,\bx[n]} \in \RR^{d \times n}.\]

\subsubsection*{Unproven claim}
\ta{Did not seem to use it anywhere.}

\subsubsection*{Definition of distribution norms}
To express our bounds, we make the following definitions. \nd{definition 2.4 to appendix?}
\begin{definition}
    For $\mu = \sum_{i=1}^n w_i \delta_{\bx[i]} \in \Pnd$, define for $p \geq 1$
    \[ 
        \norm{\mu}_{p} \eqdef \br{ \sum_{i=1}^n w_i \norm{\bx[i]}^p }^{\tfrac{1}{p}}
        \quad \text{and} \quad
        \norm{\mu}_{\infty} \eqdef \max\setst{\norm{\bx[i]}}{i \in \brs{n}, w_i > 0}.
    \]
\end{definition}
Note that these functions are not norms, strictly speaking, since $\Pnd$ is not endowed with a linear structure. Also note that by the norm inequality,$\norm{\mu}_{p}$ is monotone increasing with $p$. 
}{}
\iftoggle{show_our_comments}{\section*{Notes}

To do:
\begin{enumerate}
    \item Generalize to non-Lipschitzness w.r.t. sliced Wasserstein.
    \item Show that there always exist violating subsequences $\mu_t$, $\tilde\mu_t$ that are bounded, or there exist such sequences $\br{\bX_t,\bp_t}$, $\br{\tilde\bX_t,\tilde\bp_t}$ such that $\bX_t$ and $\tilde\bX_t$ do not change their direction from 0.
\end{enumerate}

Titles:
\begin{enumerate}
    \item Injective Sliced-Wasserstein embedding for [finite] multisets and distributions
    \item Sliced-Wasserstein embedding: Injective on measures, bi-Lipschitz on multisets
    \item Injective Sliced-Wasserstein embedding for finite distributions that is bi-Lipschitz on multisets
\end{enumerate}}{}

\newpage
\appendix
The appendices are organized as follows: \Cref{app_further_details} provides additional details on the FSW embedding beyond those covered in the main text. \Cref{app_proofs} provides proofs for our theoretical results. \Cref{app_numerical_experiments} presents technical details of our numerical experiments.
\section{Further details on the FSW embedding}\label{app_further_details}
In this appendix, we discuss additional aspects of the FSW embedding. In \cref{subapp_extension_to_measures}, we extend $\embfswm$ from probability distributions to measures of arbitrary total mass while retaining injectivity. In \cref{subapp_practical_computation}, we present explicit formulas for practical computation and analyze the computational complexity of the embedding.

\ta{\subsection*{Additional Properties} Describe here additional properties of the embedding: Behaviour near all-zero frequencies, global shifts of the input, input negation, differentiability}

\input{text/subapp_extension_to_measures}
\input{text/subapp_practical_computation}
\newpage
\section{Proofs}\label{app_proofs}
This appendix contains proofs for the theoretical results presented in the main text. \Cref{subsec_proofs_probabilistic_properties} provides proofs for our probabilistic results \cref{thm_delta_exp_var,thm_delta_exp_var_corollary}, which rely on properties of the cosine transform discussed in \cref{subsec_proofs_cosine_transform}. Readers interested only in injectivity and bi-Lipschitzness may skip directly to \cref{subsec_proofs_bilipschitzness}, which contains proofs for the results from \cref{sec_theoretical_results}.

\input{text/subapp_proofs_cosine_transform.tex}

\input{text/subapp_proofs_probabilistic.tex}

\input{text/subapp_proofs_bilipschitzness.tex}

\newpage
\section{Experiment details}
\label{app_numerical_experiments}

\paragraph{Hardware}
All experiments were conducted on a single Nvidia A40 GPU.

\subsection{Comparison with PSWE}
\label{subapp_comparison_with_pswe}
In our comparison with PSWE, shown in Table \ref{table_empirical_distortion},
we used pairs of multisets of the form $\bX=\brc{\frac{i}{n+1}\bone}_{i=1}^n$ for two different values of $n$, denoted $n_1,n_2$, where $\bone \in \RR^d$ is the vector whose entries all equal 1. For example, one could choose 
$$\bX_1= \brc{\tfrac{1}{3}, \tfrac{2}{3}},
\quad
\bX_2=\brc{\tfrac{1}{4}, \tfrac{2}{4}, \tfrac{3}{4}}.$$
Despite the fact that these multisets are distinct, the interpolation used by PSWE (see e.g., \cite{naderializadeh2021pooling}) identifies these two multisets with the same continuous distribution, and hence assigns them the same embedding. In Table \ref{table_empirical_distortion}  we used $d=3$ and $n_1=5$ and $n_2=200$, and found, as expected, that PSWE assigns the same value to both multisets, up to numerical error, regardless of the embedding dimension $m$. In contrast, our method successfully separates $\bX_1$ and $\bX_2$ even with $m=1$, and yields a good approximation of the sliced Wasserstein distance as $m$ increases. Similar results to those shown in the table were obtained for other choices of $d$, $n_1$, $n_2$.

Note that computing the sliced Wasserstein distance between such pairs is straightforward, since, for any two multisets $\bX_1, \bX_2$ in $\RR^d$ with all points belonging to the same line, it can be shown that 
$$\SWass\of{\bX_1,\bX_2} = \frac{1}{\sqrt{d}}\Wass\of{\bX_1,\bX_2},$$
and $\Wass\of{\bX_1,\bX_2}$ can be computed using a linear programming solver.

The PSWE method takes two input parameters---the number of slices $L$ and embedding dimension $N$, which, unlike our method, need not be identical. We set both parameters to the embedding dimension $m$.
%



\subsection{Learning to approximate the Wasserstein distance}
\label{subapp_experiments_wasserstein_approximation}
\ta{Say that in PSWE we use $N$=$L$=1000 or something. Say PSWE has a learnable and fixed version, and we used the learnable version in both PSWE and our embedding. OR that we took the best results among FPSWE/LPSWE.}
In this experiment we used embedding dimensions $m_1=m_2=1000$. The MLP consisted of three layers with a hidden dimension of $1000$. With this choice of hyperparameters, our model has roughly $\SI{3}{\million}$ learnable parameters and $\SI{5}{\million}$ parameters in total. These hyperparameters were picked manually. The performance of our architecture did not exhibit high sensitivity to the choice of hyperparameters: on most datasets, similar results were obtained with MLPs consisting of 2 to 8 layers, and with hidden dimensions of 500, 1000, 2000 and 4000.

We used fixed parameters for the first embedding $\emb_1$ and learnable parameters for the second embedding $\emb_2$. This choice was made since $\emb_1$ is, in most cases, supposed to handle arbitrary input point clouds, whereas the input to $\emb_2$ is more specific, in that it is always a set of two vectors that are outputs of $\emb_1$. Thus, in principle the architecture may benefit from tuning $\emb_2$ to its particular input structure. In practice, using fixed parameters in both embeddings did not significantly impair performance.

Remarkably, applying an MLP to the input points prior to embedding them via $\emb_1$ (i.e. adding a feature transform), as well as applying an MLP to the two outputs of $\emb_1$ prior to embedding them via $\emb_2$, \emph{impaired} rather than improved the performance. This indicates that our embedding is expressive enough to encode all the required information from the input multisets in a way that facilitates processing by the MLP $\Phi$, thus making  additional processing at intermediate steps unnecessary. 

Inference times for one pair of multisets were less than half a second for the \texttt{ModelNet-large} dataset, and less than 0.2 seconds for the rest of the datasets. The training times of the competing models appear in \cref{table_wasserstein_approximation_training_time}.

Training was performed on an Nvidia A40 GPU, whereas the rest of the methods were trained over an Nvidia RTX A6000 GPU, both of which have similar performance on 32-bit floating point (37.4 and 38.7 TFLOPS).


Exact computation of the 1-Wasserstein distance using the \texttt{ot.emd2()} function of the Python Optimal Transport package \citep{flamary2021pot} was up to 2.5 times slower than our method (2 to \SI{5}{\millisecond} vs \SI{1.9}{\millisecond}) on small multisets (less than 300 elements) and 150 times slower (\SI{640}{\millisecond} vs \SI{4.2}{\millisecond}) on large multisets (\texttt{ModelNet-large}).

\iftoggle{include_parameter_reduction_experiment} {

\subsection{Robustness to parameter reduction}
\label{subapp_experiments_modelnet_classification}
In both architectures, the input and output dimension of each MLP, as well as that of the FSW embedding, were multiplied by a scaling factor to obtain the desired number of parameters.
Training our model in all problem instances took between 60 to 65 minutes. Training PointNet took between 4:43 hours to 5 hours, and was done using the original code of \citep{pointNetPyTorch}.

} {} 


\end{document}